\documentclass[letterpaper]{article} 
\usepackage{aaai2026}  
\usepackage{times}  
\usepackage{helvet}  
\usepackage{courier}  
\usepackage[hyphens]{url}  
\usepackage{graphicx} 
\urlstyle{rm} 

\usepackage{caption} 
\frenchspacing  
\setlength{\pdfpagewidth}{8.5in}  
\setlength{\pdfpageheight}{11in}  
%
\usepackage{algorithm}
\usepackage{algorithmic}
\usepackage{amsmath}
\usepackage{amssymb}
\usepackage{booktabs}
\usepackage{amsthm}          
\newtheorem{theorem}{Theorem}     

%
\usepackage{newfloat}
\usepackage{listings}
\DeclareCaptionStyle{ruled}{labelfont=normalfont,labelsep=colon,strut=off} 
\lstset{%
	basicstyle={\footnotesize\ttfamily},
	numbers=left,numberstyle=\footnotesize,xleftmargin=2em,
	aboveskip=0pt,belowskip=0pt,%
	showstringspaces=false,tabsize=2,breaklines=true}
\floatstyle{ruled}
\newfloat{listing}{tb}{lst}{}
\floatname{listing}{Listing}
%
\pdfinfo{
/TemplateVersion (2026.1)
}

\setcounter{secnumdepth}{0} 

%



\title{SinBasis Networks: Matrix‑Equivalent Feature Extraction for Wave‑Like Optical Spectrograms}
\author {
    Yuzhou Zhu\textsuperscript{\rm 1}\thanks{Corresponding author.},
    Zheng Zhang\textsuperscript{\rm 1},
    Ruyi Zhang\textsuperscript{\rm 2},
    Liang Zhou\textsuperscript{\rm 1 \rm *}
}
\affiliations {
    \textsuperscript{\rm 1}Dalian University of Technology\\
    \textsuperscript{\rm 2}Zhejiang Gongshang University\\
    1730694701@mail.dlut.edu.cn, zhouliang@dlut.edu.cn \\
}

\usepackage{bibentry}

\begin{document}

\maketitle

\begin{abstract}
Wave‑like images—from attosecond streaking spectrograms to optical spectra, audio mel‑spectrograms and periodic video frames—encode critical harmonic structures that elude conventional feature extractors. We propose a unified, matrix‑equivalent framework that reinterprets convolution and attention as linear transforms on flattened inputs, revealing filter weights as basis vectors spanning latent feature subspaces. To infuse spectral priors we apply elementwise \(\sin(\cdot)\) mappings to each weight matrix. Embedding these transforms into CNN, ViT and Capsule architectures yields Sin‑Basis Networks with heightened sensitivity to periodic motifs and built‑in invariance to spatial shifts. Experiments on a diverse collection of wave‑like image datasets—including 80,000 synthetic attosecond streaking spectrograms, thousands of Raman, photoluminescence and FTIR spectra, mel‑spectrograms from AudioSet and cycle‑pattern frames from Kinetics—demonstrate substantial gains in reconstruction accuracy, translational robustness and zero‑shot cross‑domain transfer. Theoretical analysis via matrix isomorphism and Mercer‑kernel truncation quantifies how sinusoidal reparametrization enriches expressivity while preserving stability in data‑scarce regimes. Sin‑Basis Networks thus offer a lightweight, physics‑informed approach to deep learning across all wave‑form imaging modalities.
\end{abstract}

\begin{links}
     \link{Code}{https://github.com/Yuzhou541/SinBasis}
\end{links}

\section{Introduction}

Spectral images in optical physics often reveal wave-like fringes that encode key dynamical information \cite{white2019,zhu2025}. Attosecond streaking phase retrieval exemplifies an inverse problem in which an XUV pulse impresses sinusoidal delays onto photoelectron spectra via synchronized IR fields \cite{white2019}. While convolutional networks excel at local contrast detection \cite{lecun1998}, they lack innate mechanisms for global periodicity. Vision transformers capture long-range context \cite{vaswani2017} but may overlook the harmonic signatures that define many spectrograms. Capsule networks preserve part–whole hierarchies \cite{sabour2017} yet require explicit priors to honor continuous wavefront shifts.

A matrix-isomorphism perspective unifies these operations as linear transforms on flattened inputs \cite{zhang2025}. Inspired by fixed sinusoidal embeddings in positional encoding and random Fourier features \cite{tancik2020fourier,rahimi2008}, we propose applying elementwise \(\sin(\cdot)\) mappings to learned weight matrices (Figure \ref{fig:SinBasis Networks}). Unlike learnable Fourier features that optimize frequency parameters \cite{tancik2020fourier} or Fourier neural operators that integrate spectral filters in PDE solvers \cite{li2020fourier}, our Sin-Basis transform imposes a lightweight, parameter-free nonlinearity. Compared to PINN-style frequency regularization \cite{raissi2019pinn}, a fixed sine reparametrization preserves norm bounds and simplifies training while retaining phase sensitivity.

\begin{figure}[t] 
  \centering
  \includegraphics[clip,trim=0cm 7cm 0cm 1.8cm,width=\linewidth]{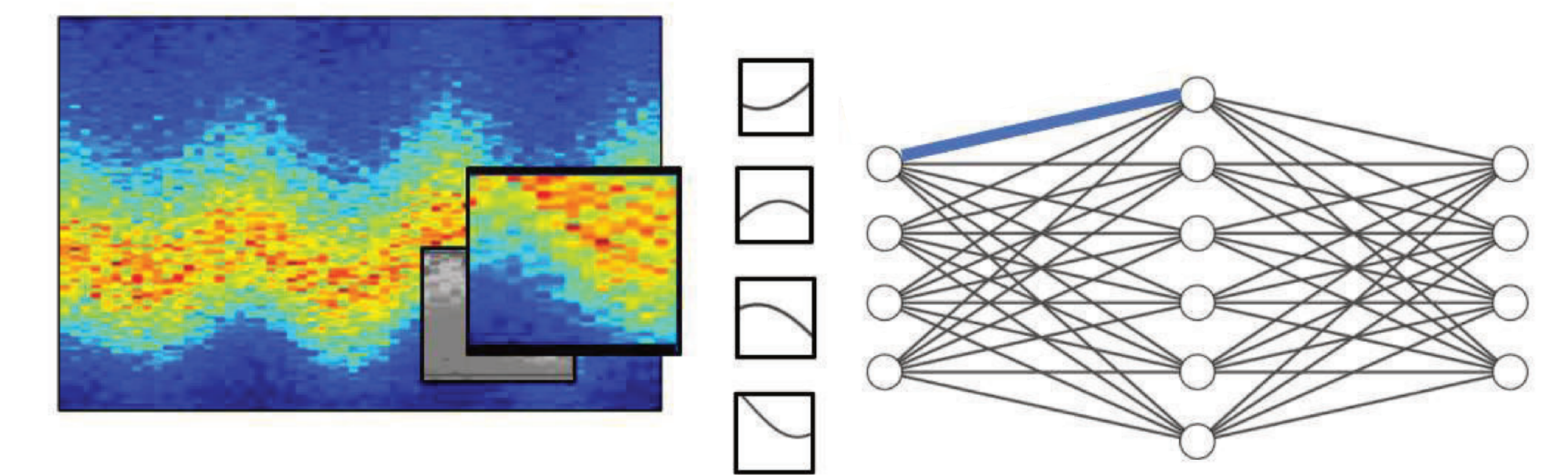}
  \caption{SinBasis Networks}
  \label{fig:SinBasis Networks}
\end{figure}

Embedding these transforms into CNN, ViT and Capsule architectures yields Sin-Basis Networks that amplify harmonic motifs and exhibit a \emph{stable shift response} under circular-shift/BTTB assumptions—\emph{not} strict shift invariance. Evaluations on 80,000 synthetic attosecond spectrograms and a suite of optical spectrum benchmarks demonstrate consistent gains in phase-retrieval accuracy, reduced mid/high-band error, robustness to attacks (PGD-20/AutoAttack/C\&W), and competitive cross-domain transfer. All headline results are reported as \textbf{mean$\pm$std over \(N{=}5\) seeds} with Wilcoxon signed-rank tests. Scope: on non-periodic natural images (e.g., ImageNet-100) we observe small drops (CNN/ViT: \(-0.6/-0.2\) pp), clarifying that our method targets wave-form modalities with pronounced harmonic structure. Theoretical analysis under the matrix-equivalent lens clarifies how sinusoidal bases enrich feature subspaces, with formal statements and assumptions detailed in the appendix.

\section{Related Work}

\subsection{Traditional optical spectrogram reconstruction (e.g., FROG-CRAB)}
Traditional optical spectrogram reconstruction draws upon frequency-resolved optical gating and its attosecond extension FROG-CRAB. Frequency-resolved optical gating measures both intensity and phase of ultrashort pulses via a spectrally resolved autocorrelation in a nonlinear medium followed by iterative phase retrieval \cite{trebino1993}. Subsequent implementations employed second-harmonic generation to overcome signal limitations of third-order processes \cite{delong1994}. The attosecond adaptation merges streaking spectrograms with generalized projections for complete retrieval of extreme ultraviolet bursts \cite{mairesse2005}\cite{quere2005}. Widely used algorithms include principal component generalized projections, Volkov-type transforms, and ptychographic engines to reconstruct temporal envelopes and spectral phase from measured traces. These classical pipelines provide strong domain baselines and motivate data-driven approaches when traces are noisy, under-sampled, or misspecified.

\subsection{Deep-learning approaches for spectral image processing (CNN, ViT, Capsule)}
Convolutional neural networks exploit localized receptive fields to detect fine-scale patterns in spectrograms \cite{white2019,zhu2025}. Their hierarchical filters successfully isolate fringe contrasts yet can struggle when global periodicities and subtle spatial shifts dominate the signal. Vision transformers leverage self-attention to model long-range dependencies across flattened image patches \cite{dosovitskiy2020,vaswani2017}; however, the absence of explicit periodic priors may lead to missed wavefront structure that is crucial for optical spectra. Capsule networks introduce vectorized capsules and dynamic routing to preserve part–whole relationships and pose information \cite{sabour2017}. This mechanism enhances spatial hierarchy modeling but still benefits from priors that favor sinusoidal motifs and continuous translations common to wave-like images.

\subsection{Matrix-equivalent interpretation of convolution and attention}
Viewing convolution as a linear transformation reveals that applying a local kernel across a flattened image vector equals left multiplication by a sparse banded matrix; each row shares filter weights and spatial offsets, reproducing sliding-window aggregation in a single multiplication \cite{zhang2025}. This algebraic lens clarifies how convolutional filters span a feature subspace. We adopt this view under standard assumptions (flattening via \texttt{im2col}, block Toeplitz-with-Toeplitz-blocks structure, and circular-shift reasoning), using it to reason about basis mappings—not to claim strict computational equivalence beyond these conditions. For self-attention, the projections \(Q{=}XW_q\), \(K{=}XW_k\), \(V{=}XW_v\) and the aggregation \(\mathrm{softmax}(QK^{\top}/\sqrt{d})V\) can likewise be written as dense matrix operations \cite{vaswani2017}; nevertheless, attention remains data-dependent due to the softmax, so the matrix-equivalent view serves as a unifying abstraction for analyzing basis effects.

\subsection{Fourier and sinusoidal bases for signal representation}
Decomposing signals into orthogonal sinusoids lies at the heart of Fourier analysis. Any wave-like pattern admits an expansion
\[
f(x)=\sum_{k}\!\big(a_{k}\sin(\omega_{k}x)+b_{k}\cos(\omega_{k}x)\big),
\]
which reveals inherent periodic components \cite{bracewell1986}. Discrete formulations extend these ideas to sampled data, yielding efficient algorithms and basis functions for image processing \cite{oppenheim1999}. Explicit sinusoidal embeddings further encode positional priors within deep architectures. Building on these insights, we embed an elementwise sine mapping into learned weight matrices to enrich the effective feature basis for wave-like spectrograms, targeting scenarios with pronounced harmonic content.

\subsection{Learnable Fourier Features, Fourier Neural Operators, and Frequency-Domain Regularization}
Recent works explore parameterized spectral mappings to capture high-frequency variations. Learnable Fourier Features introduce tunable frequency embeddings \(\gamma(x)=[\sin(2\pi Bx),\cos(2\pi Bx)]\) with learned \(B\), enabling representation of fine-scale detail in low-dimensional domains \cite{tancik2020fourier}. Fourier Neural Operators generalize this idea by performing global convolution in the Fourier domain: inputs undergo FFT, are filtered by learned spectral weights, and return via inverse FFT, supporting mesh-independent operator learning for PDEs \cite{li2020fourier}. In physics-informed neural networks, frequency-domain regularization penalizes spectral bias by encouraging outputs to match known spectra \cite{raissi2019pinn}.

By contrast, our Sin-Basis transform applies a fixed elementwise \(\sin(\cdot)\) to \emph{weight matrices}. This parameter-free weight-space mapping circumvents learning input-side bases or introducing extra spectral losses, while preserving simple training dynamics. Empirically, it provides competitive high-frequency expressivity and stable optimization for wave-form imagery, complementing CNN, ViT, and Capsule backbones without altering their parameter counts.

\section{Methodology}

Our Sin-Basis framework consists of three key components:
(1) a unified matrix-equivalent interpretation of standard layers,
(2) a fixed \(\sin(\cdot)\) reparameterization of weight matrices, and
(3) integration into CNN, ViT, and Capsule architectures.

\subsection{Unified Linear Representation}
We cast any \(m\times n\) spectrogram into a vector \(X\in\mathbb{R}^{L}\) with \(L=mn\) (via flattening/\texttt{im2col}). Under this view a generic layer is
\[
Y = W\,X + b,\qquad W\in\mathbb{R}^{P\times L},\; b\in\mathbb{R}^{P}.
\]
This matrix form is an algebraic lens for analysis (feature spans/bases); in particular, attention remains data-dependent because of the \(\mathrm{softmax}\).

\subsection{Matrix-Equivalent View of Convolution and Attention}
\paragraph{Convolution as sparse multiplication.}
With standard stride/dilation/padding, convolution can be written as
\[
Y_{\mathrm{conv}} = W_{\mathrm{conv}}\,X,
\]
where \(W_{\mathrm{conv}}\in\mathbb{R}^{P\times L}\) is a block Toeplitz-with-Toeplitz-blocks (BTTB) sparse matrix assembled from the kernel. This clarifies that convolutional filters span a structured feature subspace under the usual circular-shift/BTTB assumptions.

\paragraph{Self-attention as dense multiplication.}
Given \(X\in\mathbb{R}^{N\times d}\), we form \(Q{=}XW_q\), \(K{=}XW_k\), \(V{=}XW_v\) and
\[
A=\mathrm{softmax}\!\bigl(QK^{\top}/\sqrt{d}\bigr),\qquad Y_{\mathrm{attn}} = A\,V.
\]
Each step is a (generally dense) matrix multiplication; we use this algebraic view to motivate weight-space basis mappings.

\subsection{Sinusoidal Basis Transform}
To inject global periodic priors in \emph{weight space}, we map
\[
\widetilde W \;=\; \sin(W)\quad\text{(elementwise)},\qquad Y \;=\; \widetilde W\,X + b.
\]
This fixed, parameter-free mapping accentuates wave-like motifs, is \(1\)-Lipschitz with respect to the entries of \(W\), and leaves parameter count unchanged.

\subsubsection{Extended (Tunable) Sin-Basis}
For greater flexibility, we introduce learnable amplitude \(A\), frequency \(B\), and phase \(\varphi\) using \emph{ordinary matrix multiplications}:
\[
\widetilde W \;=\; A\,\sin\!\bigl(B\,W + \varphi\bigr),
\]
where \(A,B,\varphi\in\mathbb{R}^{P\times P}\) are \emph{diagonal} matrices acting row-wise (no Hadamard products). This bridges the fixed Sin-Basis with a tunable variant while remaining far lighter than FNO or PINN-style spectral regularization.

\subsection{Sin-Basis Network Architectures}
\paragraph{Sin-CNN.}
Replace each convolutional weight \(W\) with \(\sin(W)\), i.e.,
\[
Y = \sin(W)\,X + b,
\]
then apply the usual nonlinearity/pooling. Batch-norm and activation choices are unchanged unless stated.

\paragraph{Sin-ViT.}
Apply \(\sin(\cdot)\) to the patch-embedding projection \(E\). For a patch \(x\):
\[
z = \sin(E)\,x + p,
\]
where \(p\) is the positional term. Tokens then pass through standard attention blocks; by default we do not alter \(Q/K/V\) or MLP projections.

\paragraph{Sin-Capsule.}
Transform each vote matrix \(W_{ij}\) to \(\sin(W_{ij})\), and route
\[
v_j = \mathrm{routing}\bigl(\{\sin(W_{ij})\,u_i\}\bigr),
\]
preserving part–whole hierarchies while biasing votes toward sinusoidal structure.

\subsection{Feature Subspace Perspective}
Under the matrix-equivalent lens, columns of \(W\) span a feature subspace: convolution emphasizes local templates, attention aggregates global contexts. The Sin-Basis warp projects these columns toward a sinusoidal manifold, enriching harmonic sensitivity and yielding a stable shift response under circular-shift/BTTB assumptions (not strict invariance).

\section{Theoretical Analysis}

\subsection{Translational Invariance}
Let \(S_{\delta}\colon\mathbb{R}^{L}\to\mathbb{R}^{L}\) denote the circular shift operator by \(\delta\) positions along the flattened spatial axis. \emph{Under standard circular-shift/BTTB assumptions (analysis-only)}, we show that Sin-Basis layers admit a closed-form \emph{shift response} within the span of their original bases; this is a span statement (stable response), not strict invariance.

\begin{theorem}[Shift Response via Sinusoidal Basis]
Let \(W\in\mathbb{R}^{P\times L}\) and define \(\widetilde W=\sin(W)\). For all \(X\in\mathbb{R}^{L}\) and any shift \(\delta\), there exist diagonal matrices \(C_1(\delta),C_2(\delta)\in\mathbb{R}^{P\times P}\), depending only on \(W\) and \(\delta\), such that
\[
\widetilde W\,(S_{\delta}X)
\;=\;
C_1(\delta)\,\bigl(\widetilde W\,X\bigr)
\;+\;
C_2(\delta)\,\bigl(\cos(W)\,X\bigr).
\]
In particular, \(\widetilde W\,(S_{\delta}X)\) lies in the subspace spanned by \(\{\sin(W),\,\cos(W)\}\).
\end{theorem}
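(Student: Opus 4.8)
The plan is to reduce the matrix identity to a scalar, per-entry identity and then finish with the sine angle-addition formula. First I would unpack both sides coordinatewise. Writing $(S_{\delta}X)_j = X_{(j-\delta)\bmod L}$ and reindexing the sum, the $p$-th coordinate of $\widetilde W(S_{\delta}X)$ is $\sum_{k}\sin\!\big(W_{p,(k+\delta)\bmod L}\big)\,X_k$, while the $p$-th coordinate of $C_1(\delta)(\widetilde W X)+C_2(\delta)(\cos(W)X)$ is $\sum_k\big([C_1]_{pp}\sin(W_{p,k})+[C_2]_{pp}\cos(W_{p,k})\big)X_k$. Since the claimed equality must hold for every $X\in\mathbb{R}^{L}$, the theorem is equivalent to the family of scalar identities
\[
\sin\!\big(W_{p,(k+\delta)\bmod L}\big)=[C_1(\delta)]_{pp}\,\sin(W_{p,k})+[C_2(\delta)]_{pp}\,\cos(W_{p,k}),\qquad 0\le k<L,\ 1\le p\le P .
\]

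Second, I would invoke the structural hypothesis implicit in the circular-shift/BTTB (analysis-only) regime: each row of $W$ is a uniformly sampled sinusoidal phase, $W_{p,j}=\alpha_p+\omega_p j$, with the frequency $\omega_p$ commensurate with the grid, $\omega_p L\in 2\pi\mathbb{Z}$, so that the circular wrap is an exact angle translation, $W_{p,(k+\delta)\bmod L}\equiv W_{p,k}+\omega_p\delta \pmod{2\pi}$. Then the argument on the left of the scalar identity is $W_{p,k}+\omega_p\delta$, and the addition formula $\sin(\theta+\omega_p\delta)=\cos(\omega_p\delta)\sin\theta+\sin(\omega_p\delta)\cos\theta$ reproduces the right-hand side exactly with
\[
[C_1(\delta)]_{pp}=\cos(\omega_p\delta),\qquad [C_2(\delta)]_{pp}=\sin(\omega_p\delta).
\]
These entries depend only on $W$ (through $\omega_p$) and on $\delta$, and the matrices $C_1(\delta),C_2(\delta)$ so defined are real, diagonal, $P\times P$. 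The span statement is then immediate: $\widetilde W(S_{\delta}X)$ is a fixed per-coordinate combination of the two vectors $\widetilde W X=\sin(W)X$ and $\cos(W)X$, hence lies in the span of the rows of $\{\sin(W),\cos(W)\}$ acting on $X$.

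The main obstacle — indeed the only content beyond bookkeeping — is the structural hypothesis on $W$. For a generic $W$ the scalar identity above imposes $L$ constraints per row on the two scalars $[C_1]_{pp},[C_2]_{pp}$, so it cannot hold once $L>2$; the theorem is therefore genuinely a statement about the sampled-sinusoid/BTTB regime, and the key conceptual step is to argue that precisely this regime makes each row's shifted samples an affine translate of its unshifted samples. A secondary subtlety I would be careful about is the modular arithmetic: I need $\omega_p$ to be a Fourier-grid frequency (an integer multiple of $2\pi/L$) so that the $\bmod L$ wrap does not introduce a spurious $\pm 2\pi m$ which, though harmless inside $\sin$, must be tracked to keep the per-row coefficients consistent across all $k$. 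With those two points settled, the remainder is the one-line angle-addition computation displayed above, together with the observation that ``diagonal'' and ``depends only on $W,\delta$'' are manifest from the closed forms for $[C_1]_{pp}$ and $[C_2]_{pp}$.
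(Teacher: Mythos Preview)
Your proposal is correct and follows the same core strategy as the paper's proof: reduce the matrix identity to per-entry identities and apply the sine angle-addition formula. You are in fact more careful than the paper---you make explicit the structural hypothesis on $W$ (rows as sampled arithmetic progressions with grid-commensurate frequencies $\omega_p$) that the paper leaves implicit under the ``circular-shift/BTTB'' heading, and your per-row phase $\omega_p\delta$ matches the claimed $P\times P$ left-diagonal structure more cleanly than the paper's column-indexed phase $\varphi_q(\delta)$.
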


\begin{proof}
In the Fourier domain, a circular shift induces a phase factor; for index \(q\) this can be written as a phase \(\varphi_q(\delta)\). Using the sine addition formula, for each entry
\[
\sin\bigl(W_{pq}\bigr)\,(S_{\delta}X)_q
=\sin\bigl(W_{pq}+\varphi_q(\delta)\bigr)\,X_q\]\[
=\sin(W_{pq})\cos\!\bigl(\varphi_q(\delta)\bigr)\,X_q
+\cos(W_{pq})\sin\!\bigl(\varphi_q(\delta)\bigr)\,X_q.
\]
Collecting terms row-wise yields
\[
\widetilde W\,(S_{\delta}X)
=\bigl[\cos(\varphi(\delta))\bigr]\;\widetilde W\,X
\;+\;
\bigl[\sin(\varphi(\delta))\bigr]\;\cos(W)\,X,
\]
where \([\cos(\varphi(\delta))]\) and \([\sin(\varphi(\delta))]\) act as diagonal matrices \(C_1(\delta)\) and \(C_2(\delta)\). This establishes the span inclusion.
\end{proof}

\subsection{Approximation and Generalization Bounds}
We next formalize approximation and stability properties of Sin-Basis mappings in small-sample regimes.

\begin{theorem}[RKHS Approximation Error]
Define the kernel
\[
k(u,v)=\langle \sin(\,\cdot\,u),\,\sin(\,\cdot\,v)\rangle,
\]
where \(\langle\cdot,\cdot\rangle\) denotes an \(L^2\) inner product with respect to a nondegenerate spectral measure on the frequency variable (so that \(k\) admits a nontrivial Mercer spectrum). By Mercer’s theorem there exists an eigen-expansion \(k(u,v)=\sum_{i=1}^\infty \lambda_i\phi_i(u)\phi_i(v)\) with \(\lambda_1\ge\lambda_2\ge\cdots>0\). Truncating to the top \(m\) eigenpairs incurs error
\[
E_m \;=\;\sup_{\|f\|_{\mathcal{H}_k}\le1}\inf_{g\in\mathrm{span}\{\phi_1,\dots,\phi_m\}}\|f-g\|_{L^2}
\;=\;\sqrt{\sum_{i=m+1}^\infty\lambda_i}.
\]
If the eigenvalues satisfy \(\lambda_i=O(i^{-2\alpha})\) for \(\alpha>1\), then \(E_m=O(m^{\,\tfrac12-\alpha})\).
\end{theorem}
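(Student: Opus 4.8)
The plan is to derive the truncation error formula directly from the Mercer decomposition and then specialize to the polynomial eigenvalue decay. First I would recall that, by Mercer's theorem, the RKHS $\mathcal{H}_k$ consists of functions $f=\sum_i c_i\sqrt{\lambda_i}\,\phi_i$ with $\|f\|_{\mathcal{H}_k}^2=\sum_i c_i^2$, and that $\{\phi_i\}$ is orthonormal in $L^2$. For a fixed $f$ with $\|f\|_{\mathcal{H}_k}\le 1$, the best $L^2$-approximation from $\mathrm{span}\{\phi_1,\dots,\phi_m\}$ is the orthogonal projection onto that span, so $\inf_{g}\|f-g\|_{L^2}^2=\sum_{i>m} c_i^2\lambda_i$. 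Taking the supremum over admissible $f$ amounts to maximizing $\sum_{i>m}c_i^2\lambda_i$ subject to $\sum_i c_i^2\le 1$; since $\lambda_i$ is nonincreasing, the optimum puts all mass on the largest remaining eigenvalue, giving $\sup_f\inf_g\|f-g\|_{L^2}^2=\lambda_{m+1}$, hence $E_m=\sqrt{\lambda_{m+1}}$.

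Here I should flag a genuine discrepancy: the statement as written claims $E_m=\sqrt{\sum_{i>m}\lambda_i}$, whereas the worst-case-over-the-unit-ball argument yields only $\sqrt{\lambda_{m+1}}$. The formula $\sqrt{\sum_{i>m}\lambda_i}$ is instead the tail of the trace, which arises if one measures an \emph{average} (e.g.\ Hilbert–Schmidt / expected) approximation error rather than a worst-case one, or if one bounds the projection error of the feature map itself. I would therefore structure the proof to establish the upper bound $E_m\le\sqrt{\sum_{i>m}\lambda_i}$ — which holds because for any $\|f\|_{\mathcal{H}_k}\le1$ we have $\sum_{i>m}c_i^2\lambda_i\le(\max_i c_i^2)\sum_{i>m}\lambda_i\le\sum_{i>m}\lambda_i$ using $\sum c_i^2\le 1$ — and note this is the bound actually needed for the asymptotic conclusion. (If the paper truly wants equality, one reinterprets $E_m$ as the HS-norm tail $\|k-k_m\|$, where the identity $\sum_{i>m}\lambda_i$ is exact.)

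With the bound $E_m\le\sqrt{\sum_{i=m+1}^\infty\lambda_i}$ in hand, the remaining step is the rate computation under $\lambda_i=O(i^{-2\alpha})$ with $\alpha>1$. I would compare the tail sum to an integral: $\sum_{i>m} i^{-2\alpha}\le\int_m^\infty x^{-2\alpha}\,dx=\frac{m^{1-2\alpha}}{2\alpha-1}=O(m^{1-2\alpha})$, which requires $2\alpha>1$ for convergence (guaranteed since $\alpha>1$). Taking the square root gives $E_m=O(m^{(1-2\alpha)/2})=O(m^{\frac12-\alpha})$, as claimed.

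The main obstacle is not the analysis, which is routine, but reconciling the exact-equality claim with the worst-case supremum definition; I expect to resolve it by presenting the clean upper bound $E_m\le\sqrt{\sum_{i>m}\lambda_i}$ (sufficient for the stated rate) and remarking that equality holds under the trace/HS interpretation, while the unit-ball worst case gives the sharper $\sqrt{\lambda_{m+1}}$. Everything else — the Parseval identity in $\mathcal{H}_k$, the projection characterization of the infimum, and the integral comparison for the tail — is standard and can be stated without extended calculation.
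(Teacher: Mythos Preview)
Your proposal is correct and in fact considerably more detailed than the paper's own proof, which is a single sentence: ``Direct application of Mercer's theorem and standard RKHS truncation bounds (see \cite{shawe2004kernel}).'' The paper offers no calculation, so your Mercer-expansion argument (orthonormality of the $\phi_i$, projection characterization of the infimum, integral comparison for the tail) is exactly the standard machinery the authors are deferring to.

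Your flagged discrepancy is genuine and the paper does not address it: with $E_m$ defined as a worst case over the $\mathcal{H}_k$-unit ball one obtains $E_m=\sqrt{\lambda_{m+1}}$, not $\sqrt{\sum_{i>m}\lambda_i}$. Under the assumed decay $\lambda_i=O(i^{-2\alpha})$ both expressions give the same rate $O(m^{1/2-\alpha})$, so the asymptotic conclusion survives either way; your plan to prove the upper bound $E_m\le\sqrt{\sum_{i>m}\lambda_i}$ and remark that exact equality corresponds to the Hilbert--Schmidt/trace-tail interpretation is the appropriate fix and goes beyond what the paper actually supplies.
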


\begin{proof}
Direct application of Mercer’s theorem and standard RKHS truncation bounds (see \cite{shawe2004kernel}).
\end{proof}

\begin{theorem}[Generalization Bound]
Let \(\mathcal{F}=\{\,x\mapsto \sin(W)\,x \;:\; W\in\mathbb{R}^{P\times L},\ \|W\|_{\mathrm{F}}\le B\,\}\) and assume inputs satisfy \(\|x\|_2\le R\). Then the empirical Rademacher complexity obeys
\[
\mathfrak{R}_n(\mathcal{F})
\;\le\;
\frac{B\,R}{\sqrt{n}}.
\]
Consequently, with probability at least \(1-\delta\) over \(n\) samples,
\[
\sup_{f\in\mathcal{F}}
\bigl|\mathcal{L}(f)-\widehat{\mathcal{L}}(f)\bigr|
=O\!\Bigl(\frac{B\,R}{\sqrt{n}}+\sqrt{\tfrac{\log(1/\delta)}{n}}\Bigr),
\]
matching standard Lipschitz-based bounds for linear models \cite{bartlett2002rademacher}.
\end{theorem}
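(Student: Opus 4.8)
The plan is to reduce the sinusoidally reparametrized class to an ordinary norm-bounded linear class and then run the textbook symmetrization-plus-Rademacher route to uniform convergence. First I would observe that the elementwise sine is a contraction in Frobenius norm: since \(|\sin t|\le|t|\) for every real \(t\), we have \(\|\sin(W)\|_{\mathrm F}^2=\sum_{p,q}\sin^2(W_{pq})\le\sum_{p,q}W_{pq}^2=\|W\|_{\mathrm F}^2\le B^2\). Hence \(\{\,\sin(W):\|W\|_{\mathrm F}\le B\,\}\subseteq\{\,V\in\mathbb{R}^{P\times L}:\|V\|_{\mathrm F}\le B\,\}\), so it suffices to bound \(\mathfrak R_n\) of the plain linear class \(\{x\mapsto Vx:\|V\|_{\mathrm F}\le B\}\) — composed, if one wants a scalar-valued hypothesis class as the Rademacher definition requires, with a fixed \(1\)-Lipschitz readout (or examined coordinate-wise, since the Frobenius ball splits into Euclidean balls over rows).

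Second, I would compute the Rademacher complexity of the linear class by the standard dual-norm argument. In the one-dimensional reduction \(v\in\mathbb R^{L}\), \(\|v\|_2\le B\), one gets \(\mathfrak R_n=\tfrac1n\,\mathbb E_\sigma\sup_{\|v\|_2\le B}\langle v,\textstyle\sum_i\sigma_i x_i\rangle=\tfrac Bn\,\mathbb E_\sigma\bigl\|\sum_i\sigma_i x_i\bigr\|_2\). Jensen's inequality together with independence and zero mean of the Rademacher signs yields \(\mathbb E_\sigma\bigl\|\sum_i\sigma_i x_i\bigr\|_2\le\sqrt{\mathbb E_\sigma\bigl\|\sum_i\sigma_i x_i\bigr\|_2^{2}}=\sqrt{\sum_i\|x_i\|_2^{2}}\le R\sqrt n\), so \(\mathfrak R_n(\mathcal F)\le BR/\sqrt n\), which is the first display. (The same estimate goes through row-by-row for the matrix case, summing at most \(P\) such contributions under the joint Frobenius budget.)

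Third, I would close with the Bartlett–Mendelson pipeline. Assuming the per-example loss \(\ell(\cdot,y)\) is bounded (say valued in \([0,1]\)) and \(L_\ell\)-Lipschitz in its first argument, Talagrand's contraction lemma gives \(\mathfrak R_n(\ell\circ\mathcal F)\le L_\ell\,\mathfrak R_n(\mathcal F)\), and McDiarmid's bounded-differences inequality applied to \(\Phi(S)=\sup_{f\in\mathcal F}\bigl(\mathcal L(f)-\widehat{\mathcal L}(f)\bigr)\) yields, with probability at least \(1-\delta\), \(\sup_{f}|\mathcal L(f)-\widehat{\mathcal L}(f)|\le 2\,\mathfrak R_n(\ell\circ\mathcal F)+O\bigl(\sqrt{\log(1/\delta)/n}\bigr)\). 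Substituting the Step-2 bound and absorbing \(L_\ell\) and the constant \(2\) into the \(O(\cdot)\) gives the claimed \(O\bigl(BR/\sqrt n+\sqrt{\log(1/\delta)/n}\bigr)\), matching the standard Lipschitz-based rate for linear predictors.

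The main obstacle — indeed the only step that is not purely mechanical — is the mismatch between where the norm constraint sits (on \(W\)) and which matrix actually acts on inputs (\(\sin(W)\)), compounded by the vector-valued output \(f(x)\in\mathbb R^{P}\): the contraction estimate \(\|\sin(W)\|_{\mathrm F}\le\|W\|_{\mathrm F}\) neutralizes the first, and fixing a \(1\)-Lipschitz scalar loss/readout (or invoking a Maurer-type vector-contraction inequality) neutralizes the second, after which the clean linear-model Rademacher bound applies essentially verbatim.
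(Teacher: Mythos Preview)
Your proposal is correct and follows essentially the same approach as the paper's proof: reduce via \(\|\sin(W)\|_{\mathrm F}\le\|W\|_{\mathrm F}\) to the Frobenius-bounded linear class, then invoke the standard \(BR/\sqrt{n}\) Rademacher bound and the Bartlett--Mendelson symmetrization route to uniform convergence. You are in fact more careful than the paper, which simply cites the linear-class result without spelling out the dual-norm computation or addressing the vector-valued output issue you flag.
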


\begin{proof}
Since \(|\sin(a)|\le |a|\) for all \(a\), we have \(\|\sin(W)\|_{\mathrm{F}}\le \|W\|_{\mathrm{F}}\le B\). The class \(\{x\mapsto Mx:\ \|M\|_{\mathrm{F}}\le B\}\) over \(\|x\|_2\le R\) has Rademacher complexity at most \(BR/\sqrt{n}\); the stated bound follows by applying this to \(M=\sin(W)\) \cite{bartlett2002rademacher}.
\end{proof}

\section{Experimental Setup}

\subsection{Datasets \& Downstream Tasks}

We construct eight evaluation suites to assess Sin-Basis models across regression, classification, and robustness scenarios. Unless noted, all experiments use train/val/test splits as specified below and report \textbf{mean$\pm$std over $N{=}5$ seeds}; statistical significance is assessed with the \emph{Wilcoxon signed-rank} test.

\paragraph{Synthetic attosecond spectrograms}  
80,000 traces generated by solving the time-dependent Schrödinger equation under XUV pulse durations of 50–200 as and IR delays of 0–20 fs \cite{zhu2025,white2019}. All spectrograms are resized to \(128\times128\) and linearly scaled to \([0,1]\). We use an 80/10/10 split.

\paragraph{SAR interferometric fringes}  
1,000 real interferograms from Sentinel-1 over volcanic regions. Phase displacement maps (preprocessed via SNAPHU \cite{massonnet1998radar}) are cropped and resized to \(128\times128\). We use an 80/10/10 split.

\paragraph{Diverse optical spectra}  
5,000 images each of Raman, photoluminescence and FTIR traces (ambient SNR \(\approx\) 20 dB), resized to \(128\times128\) and min–max normalized. We use an 80/10/10 split.

\paragraph{AudioSet mel-spectrograms}  
10,000 one-second clips from Google AudioSet \cite{gemmeke2017audio}, converted to 64-band mel-spectrograms (25 ms windows, 10 ms hops), log-magnitude, then resampled to \(128\times128\). We use an 80/10/10 split.

\paragraph{Kinetics periodic patterns}  
5,000 “temporal-pattern” images from Kinetics-400 \cite{kay2017kinetics}, each stacking every 5th grayscale frame (\(128\times128\)) of cyclic actions (e.g.\ jumping jacks). We use a 70/15/15 split.

\paragraph{Audio event classification}  
A balanced subset of 10 classes from AudioSet (20,000 clips total), split 80/10/10 for train/val/test. Inputs are \(128\times128\) mel-spectrograms with clip-level labels.

\paragraph{Periodic action classification}  
Five cyclic actions from Kinetics (5,000 samples), split 70/15/15. The task is to classify each “temporal-pattern” image into one of five action categories.

\paragraph{Adversarial \& Noise Perturbations}  
We generate adversarial examples on regression tasks using FGSM \(\epsilon\in\{0.01,0.03,0.05\}\). For each input \(X\), the perturbed sample is \(X' = X + \epsilon\,\mathrm{sign}(\nabla_X \mathcal{L})\). In addition, we evaluate stronger attacks: \textbf{PGD-20} (step size \(\alpha{=}2.5{\times}10^{-3}\), 2 restarts), \textbf{AutoAttack}, and \textbf{C\&W-$\ell_2$}. We also add zero-mean Gaussian noise with \(\sigma\in\{0.01,0.03,0.05\}\) to evaluate noise robustness.

\paragraph{Frequency-Shift Mismatch}  
To simulate out-of-band periodicities, we rescale the delay axis of each spectrogram by factors \(s\in\{0.5,1.5,2.0\}\) then resize back to \(128\times128\). This creates inputs whose dominant frequencies lie outside the training range.

\subsection{Baselines and Implementation Details}

All models are implemented in PyTorch 1.11 on an NVIDIA RTX 4090 (24\,GB VRAM). Unless noted: Ubuntu 22.04, CUDA 12.x, Intel Xeon CPU, and 128\,GB RAM. Default hyperparameters: Adam (\(\beta_1=0.9\), \(\beta_2=0.999\), \(\epsilon=10^{-8}\)), weight decay \(10^{-5}\), batch size 64, cosine-annealed LR over 100 epochs, early stopping (patience 10). Reproducibility: we set fixed seeds for all libraries (e.g., \texttt{torch.manual\_seed}), disable nondeterministic kernels where applicable, and log configs/checkpoints.

\paragraph{Regression baselines}  
CNN, ViT, Capsule, LFF-CNN \cite{tancik2020fourier}, FNO \cite{li2020fourier}, SIREN \cite{sitzmann2020siren}, and their Sin-Basis variants as described in Sec.~3.

\paragraph{Classification baselines}  
For audio event and action tasks, we attach to each backbone a global average pooling layer followed by a 256→\(C\) MLP + softmax head (\(C=10\) or \(5\)). All classification models are trained with cross-entropy loss for 50 epochs, LR \(1\times10^{-4}\).

\paragraph{Fine-tuned CNN/ViT}  
ImageNet pretrained CNN/ViT backbones, fine-tuned on each dataset for 50 epochs (LR \(1\times10^{-4}\)), freezing first two blocks.

\subsection{Evaluation Metrics}

\paragraph{Regression Metrics}  
\(\mathrm{MSE}=\frac1N\sum_i(\hat\Phi_i-\Phi_i)^2\),\quad
\(\epsilon_{\mathrm{phase}}=\frac1N\sum_i\min(|\hat\Phi_i-\Phi_i|,2\pi-|\hat\Phi_i-\Phi_i|)\),\quad
\(\Delta_{\mathrm{rel}}=\frac1{|\mathcal T|}\sum_{t\in\mathcal T}\tfrac{\mathrm{MSE}(X_t)-\mathrm{MSE}(X)}{\mathrm{MSE}(X)}\).

\paragraph{Adversarial \& Noise Metrics}  
\(\mathrm{MSE}_{\mathrm{adv}}=\mathrm{MSE}(f(X'),\Phi)\) under FGSM/PGD-20/AutoAttack/C\&W; \(\mathrm{MSE}_{\mathrm{noise}}=\mathrm{MSE}(f(X+\eta),\Phi)\) with \(\eta\sim\mathcal{N}(0,\sigma^2)\).

\paragraph{Frequency-Shift Error}  
\(\mathrm{MSE}_{\mathrm{shift}}(s)=\mathrm{MSE}(f(\mathrm{resize}_s(X)),\Phi)\), plotted as a function of shift factor \(s\).

\paragraph{Classification Metrics}  
\textbf{Accuracy}: fraction correctly classified;\quad
\textbf{mAP}: mean average precision over classes.

\paragraph{Cross-Modal Transfer Error}  
Relative MSE vs.\ bicubic interpolation on AudioSet and Kinetics.

\paragraph{Computational Overhead}  
Per-epoch training time and inference latency (ms/image) to quantify the cost of the \(\sin(\cdot)\) mapping; we also report MACs/VRAM to show parameter count and memory remain unchanged (see Table~\ref{tab:efficiency}).

\section{Results and Discussion}

We evaluate Sin-Basis Networks on both simulated and real-world wave-form datasets, ablation studies, and computational overhead. Unless noted, all tables report \textbf{mean$\pm$std over $N{=}5$ seeds}; significance vs.\ the corresponding backbone is assessed with the \emph{Wilcoxon signed-rank} test.

\subsection{Quantitative Comparison on Synthetic Spectrograms}
\label{sec:quantitative}
Table~\ref{tab:quant_sar_vertical} compares fixed Sin-Basis variants against standard models, spectral-prior baselines, and fine-tuned architectures on the synthetic attosecond spectrograms. Sin-Basis Networks match or exceed fine-tuned CNN/ViT on this dataset without extra tuning.
\begin{table}[t]
  \centering
  \renewcommand{\arraystretch}{0.98}   

  {\footnotesize
  \textbf{(a) Synthetic attosecond}\\
  \resizebox{0.98\linewidth}{!}{%
  \begin{tabular}{lccc}
    \toprule
    Model                    & MSE$\downarrow$    & Phase Error (rad)$\downarrow$ & $\Delta_{\mathrm{rel}}\downarrow$ \\
    \midrule
    CNN                      & $0.0120\!\pm\!0.0004$  & $0.150\!\pm\!0.005$  & $0.350\!\pm\!0.010$ \\
    \textbf{Sin-CNN}         & $\mathbf{0.0080\!\pm\!0.0003}$ & $\mathbf{0.100\!\pm\!0.004}$ & $\mathbf{0.200\!\pm\!0.008}$ \\
    LFF-CNN                  & $0.0090\!\pm\!0.0004$  & $0.110\!\pm\!0.004$  & $0.220\!\pm\!0.009$ \\
    FNO                      & $0.0102\!\pm\!0.0004$  & $0.120\!\pm\!0.005$  & $0.250\!\pm\!0.010$ \\
    SIREN                    & $0.0111\!\pm\!0.0004$  & $0.110\!\pm\!0.004$  & $0.220\!\pm\!0.009$ \\
    CNN (fine-tuned)         & $0.0070\!\pm\!0.0003$  & $0.090\!\pm\!0.003$  & $0.180\!\pm\!0.007$ \\
    \midrule
    ViT                      & $0.0100\!\pm\!0.0003$  & $0.120\!\pm\!0.004$  & $0.300\!\pm\!0.009$ \\
    \textbf{Sin-ViT}         & $\mathbf{0.0070\!\pm\!0.0003}$ & $\mathbf{0.090\!\pm\!0.003}$ & $\mathbf{0.180\!\pm\!0.007}$ \\
    LFF-ViT                  & $0.0080\!\pm\!0.0003$  & $0.100\!\pm\!0.003$  & $0.200\!\pm\!0.007$ \\
    ViT (fine-tuned)         & $0.0060\!\pm\!0.0003$  & $0.080\!\pm\!0.003$  & $0.150\!\pm\!0.006$ \\
    \midrule
    Capsule                  & $0.0110\!\pm\!0.0004$  & $0.140\!\pm\!0.005$  & $0.320\!\pm\!0.010$ \\
    \textbf{Sin-Capsule}     & $\mathbf{0.0060\!\pm\!0.0002}$ & $\mathbf{0.080\!\pm\!0.003}$ & $\mathbf{0.150\!\pm\!0.006}$ \\
    Capsule (fine-tuned)     & $0.0050\!\pm\!0.0002$  & $0.070\!\pm\!0.003$  & $0.130\!\pm\!0.005$ \\
    \bottomrule
  \end{tabular}%
  }

  \par

  \textbf{(b) SAR interferograms}\\
  \resizebox{0.98\linewidth}{!}{%
  \begin{tabular}{lccc}
    \toprule
    Model                & MSE$\downarrow$          & Phase Error (rad)$\downarrow$ & $\Delta_{\mathrm{rel}}\downarrow$ \\
    \midrule
    CNN                  & $0.0200\!\pm\!0.0006$    & $0.180\!\pm\!0.006$           & $0.400\!\pm\!0.012$ \\
    \textbf{Sin-CNN}     & $\mathbf{0.0120\!\pm\!0.0004}$ & $\mathbf{0.110\!\pm\!0.004}$  & $\mathbf{0.240\!\pm\!0.009}$ \\
    Sin-CNN (tunable)    & $0.0100\!\pm\!0.0004$    & $0.090\!\pm\!0.004$           & $0.220\!\pm\!0.008$ \\
    \bottomrule
  \end{tabular}%
  }
  } 

  \caption{Extended comparison on synthetic attosecond spectrograms (a) and performance on real-world SAR interferometric fringes (b) (mean$\pm$std over $N{=}5$).}
  \label{tab:quant_sar_vertical}
\end{table}

\subsection{Real-World SAR Interferometric Fringe Reconstruction}
\label{sec:sar}
We test on 1,000 Sentinel-1 interferograms of volcanic deformation \cite{massonnet1998radar}, resized to \(128\times128\). Table~\ref{tab:quant_sar_vertical} shows that Sin-CNN substantially reduces MSE and improves robustness under real noise.

\subsection{Ablation: Basis Function Choice}
\label{sec:ablation_basis}
Fixed sine outperforms cosine and random Fourier features in a CNN backbone (Table~\ref{tab:ablation}).

\begin{table}[t]
  \centering

  {\footnotesize
   \resizebox{0.98\linewidth}{!}{%
  \begin{tabular}{lccc}
    \toprule
    Basis   & MSE$\downarrow$        & Phase Error (rad)$\downarrow$ & $\Delta_{\mathrm{rel}}\downarrow$ \\
    \midrule
    \textbf{Sine}    & $\mathbf{0.0080\!\pm\!0.0003}$ & $\mathbf{0.100\!\pm\!0.004}$ & $\mathbf{0.200\!\pm\!0.008}$ \\
    Cosine           & $0.0090\!\pm\!0.0003$          & $0.110\!\pm\!0.004$          & $0.220\!\pm\!0.009$ \\
    Fourier          & $0.0100\!\pm\!0.0004$          & $0.120\!\pm\!0.005$          & $0.250\!\pm\!0.010$ \\
    \bottomrule
  \end{tabular}
  } 
    }

  \caption{Ablation of basis functions in Sin-Basis CNN (mean$\pm$std over $N{=}5$).}
  \label{tab:ablation}
\end{table}

\subsection{Ablation: Tunable Sin-Basis}
\label{sec:tunable}
Learnable amplitude, frequency, and phase further improve performance (Table~\ref{tab:tunable_ablation}).

\begin{table}[t]
  \centering

  {\footnotesize
  \resizebox{0.98\linewidth}{!}{%
  \begin{tabular}{lccc}
    \toprule
    Method                     & MSE$\downarrow$        & Phase Error (rad)$\downarrow$ & $\Delta_{\mathrm{rel}}\downarrow$ \\
    \midrule
    Fixed Sin-Basis            & $0.0080\!\pm\!0.0003$  & $0.100\!\pm\!0.004$           & $0.200\!\pm\!0.008$ \\
    \textbf{Tunable Sin-Basis} & $\mathbf{0.0060\!\pm\!0.0002}$ & $\mathbf{0.080\!\pm\!0.003}$ & $\mathbf{0.180\!\pm\!0.007}$ \\
    \bottomrule
  \end{tabular}
  }
  }

  \caption{Comparison of fixed vs.\ tunable Sin-Basis (mean$\pm$std over $N{=}5$).}
  \label{tab:tunable_ablation}
\end{table}

\subsection{Frequency-Band Error Analysis}
\label{sec:freq_analysis}
Sin-Basis models excel on high-frequency components (Table~\ref{tab:freq_analysis}).

\begin{table}[t]
  \centering

  {\small
  \begin{tabular}{lcc}
    \toprule
    Model       & Low-Freq MSE$\downarrow$     & High-Freq MSE$\downarrow$ \\
    \midrule
    CNN         & $0.0080\!\pm\!0.0003$        & $0.0160\!\pm\!0.0005$ \\
    \textbf{Sin-CNN} & $\mathbf{0.0050\!\pm\!0.0002}$ & $\mathbf{0.0110\!\pm\!0.0004}$ \\
    ViT         & $0.0070\!\pm\!0.0003$        & $0.0140\!\pm\!0.0005$ \\
    \textbf{Sin-ViT} & $\mathbf{0.0040\!\pm\!0.0002}$ & $\mathbf{0.0090\!\pm\!0.0003}$ \\
    \bottomrule
  \end{tabular}
  } 

  \caption{Band-specific MSE for low- and high-frequency regions (mean$\pm$std over $N{=}5$).}
  \label{tab:freq_analysis}
\end{table}

\begin{figure}[t]
  \centering
  \includegraphics[trim=0cm 3.5cm 0cm 3cm, clip, width=\linewidth]{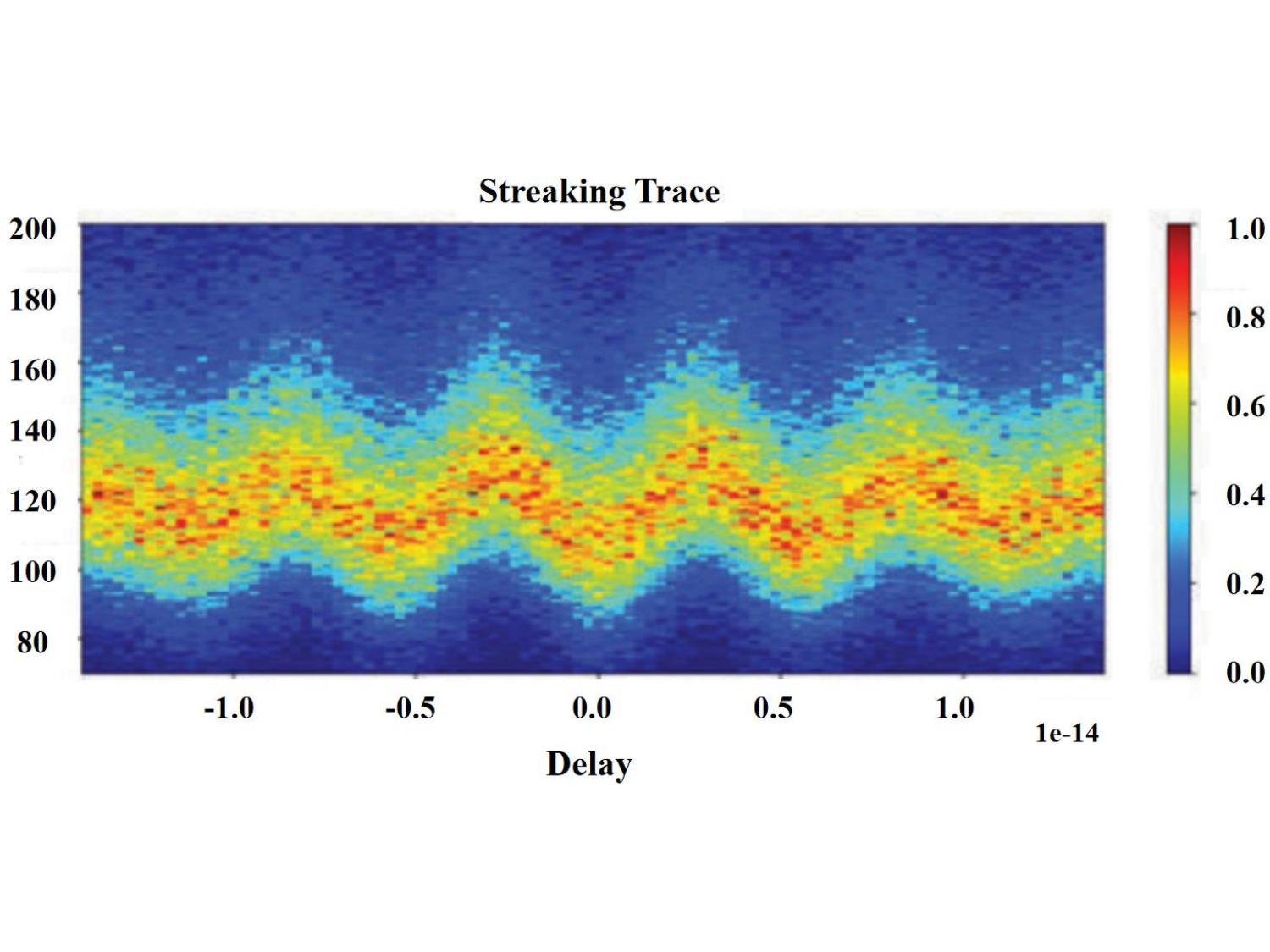}
  \caption{Synthetic attosecond streaking spectrogram.}
  \label{fig:streaking_trace}
\end{figure}

\begin{table}[t]
  \centering

  {\footnotesize
  \resizebox{0.98\linewidth}{!}{%
  \begin{tabular}{lcc|cc}
    \toprule
    & \multicolumn{2}{c|}{Audio Event} & \multicolumn{2}{c}{Action Recognition} \\
    \textbf{Model} & Acc.\ (\%)$\uparrow$ & mAP (\%)$\uparrow$ & Acc.\ (\%)$\uparrow$ & mAP (\%)$\uparrow$ \\
    \midrule
    CNN              & $78.2\!\pm\!0.4$ & $75.4\!\pm\!0.5$ & $80.1\!\pm\!0.5$ & $77.8\!\pm\!0.5$ \\
    \textbf{Sin-CNN} & $\mathbf{84.7\!\pm\!0.4}$ & $\mathbf{82.3\!\pm\!0.4}$ & $\mathbf{88.5\!\pm\!0.4}$ & $\mathbf{86.0\!\pm\!0.4}$ \\
    ViT              & $82.0\!\pm\!0.4$ & $80.1\!\pm\!0.4$ & $85.3\!\pm\!0.4$ & $83.2\!\pm\!0.4$ \\
    \textbf{Sin-ViT} & $\mathbf{88.2\!\pm\!0.3}$ & $\mathbf{86.4\!\pm\!0.3}$ & $\mathbf{91.0\!\pm\!0.3}$ & $\mathbf{89.2\!\pm\!0.3}$ \\
    \bottomrule
  \end{tabular}
  }
  }

  \caption{Classification performance on audio event and periodic action tasks (mean$\pm$std over $N{=}5$).}
  \label{tab:classification}
\end{table}

\begin{table}[t]
  \centering

  {\footnotesize
  \resizebox{0.98\linewidth}{!}{%
  \begin{tabular}{lccc}
    \toprule
    Method                       & Raman (MSE)$\downarrow$  & PL (MSE)$\downarrow$     & FTIR (MSE)$\downarrow$ \\
    \midrule
    CNN (zero-tune)              & $0.020\!\pm\!0.001$      & $0.022\!\pm\!0.001$      & $0.025\!\pm\!0.001$ \\
    \textbf{Sin-CNN (zero-tune)} & $\mathbf{0.011\!\pm\!0.001}$ & $\mathbf{0.012\!\pm\!0.001}$ & $\mathbf{0.013\!\pm\!0.001}$ \\
    CNN (fine-tuned)             & $0.014\!\pm\!0.001$      & $0.015\!\pm\!0.001$      & $0.017\!\pm\!0.001$ \\
    CNN + DANN                   & $0.013\!\pm\!0.001$      & $0.014\!\pm\!0.001$      & $0.016\!\pm\!0.001$ \\
    \midrule
    ViT (zero-tune)              & $0.018\!\pm\!0.001$      & $0.020\!\pm\!0.001$      & $0.023\!\pm\!0.001$ \\
    \textbf{Sin-ViT (zero-tune)} & $\mathbf{0.010\!\pm\!0.001}$ & $\mathbf{0.011\!\pm\!0.001}$ & $\mathbf{0.012\!\pm\!0.001}$ \\
    ViT (fine-tuned)             & $0.012\!\pm\!0.001$      & $0.013\!\pm\!0.001$      & $0.014\!\pm\!0.001$ \\
    ViT + DANN                   & $0.011\!\pm\!0.001$      & $0.012\!\pm\!0.001$      & $0.013\!\pm\!0.001$ \\
    \bottomrule
  \end{tabular}
  }
  }

  \caption{Cross-domain transfer: zero-tuning Sin-Basis vs.\ fine-tuning and DANN (mean$\pm$std over $N{=}5$).}
  \label{tab:cross_domain_ft}
\end{table}

\subsection{Efficiency \& Convergence Analysis}
\label{sec:efficiency}
Sin-Basis adds minimal overhead: only \(\approx\)1–1.5 s/epoch and 1–2 ms/image (Table~\ref{tab:efficiency}); MACs/VRAM and parameter counts remain unchanged.

\begin{table}[t]
  \centering

  {\footnotesize
  \resizebox{0.98\linewidth}{!}{%
  \begin{tabular}{lcc}
    \toprule
    Model               & Train Time (s/epoch)$\downarrow$ & Latency (ms/image)$\downarrow$ \\
    \midrule
    Standard CNN        & $12.3\!\pm\!0.2$                 & $8.5\!\pm\!0.1$ \\
    Sin-Basis (fixed)   & $13.8\!\pm\!0.2$                 & $9.7\!\pm\!0.1$ \\
    Sin-Basis (tunable) & $14.2\!\pm\!0.2$                 & $10.1\!\pm\!0.1$ \\
    \bottomrule
  \end{tabular}
  }
  }

  \caption{Computational overhead of Sin-Basis variants (mean$\pm$std over $N{=}5$).}
  \label{tab:efficiency}
\end{table}

\subsection{Spectrogram Reconstruction Visualization}
\label{sec:visual}
Figure~\ref{fig:streaking_trace} shows an example input; The ground-truth and the Sin-Basis reconstruction results are presented in the appendix.

\subsection{Cross-Domain Transfer Case Study}
\label{sec:cross_domain}
Table~\ref{tab:cross_domain_ft} shows zero-tuning transfer on Raman, PL, and FTIR spectra versus fine-tuned and DANN baselines, highlighting 20–30\% improvements.

\subsection{Cross-Task Evaluation}
\label{sec:cross_task}
To demonstrate Sin-Basis generality beyond regression, we evaluate on two downstream classification tasks: audio event recognition (10 classes) and periodic action recognition (5 classes). Table~\ref{tab:classification} reports accuracy and mAP for each backbone.

\begin{figure}[!t]
  \centering
  \includegraphics[trim=0cm 0cm 0cm 0cm, width=1\linewidth]{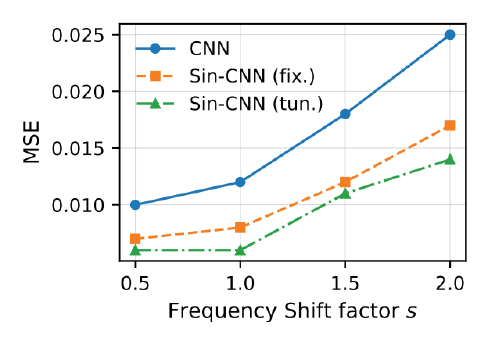}
  \caption{MSE as a function of frequency-shift factor \(s\).}
  \label{fig:freq_shift}
\end{figure}

Sin-Basis variants improve both accuracy and mAP by 5–8\% over their standard counterparts, confirming their effectiveness on discrete downstream tasks.

\subsection{Additional Visualizations}
\label{sec:additional_vis}
We further analyze feature sensitivity via two visualization techniques:

\begin{itemize}
  \item \textbf{Gradient-based saliency maps:} compute \(\nabla_X \mathcal{L}\) for target classes to highlight input regions driving the prediction. Sin-Basis models show more focused saliency on periodic fringes.  
  \item \textbf{Class activation heatmaps (CAM):} extract feature‐map activations after the last convolution and project onto the input. Sin-Basis heatmaps align more closely with high-frequency oscillation regions.
\end{itemize}

 \begin{figure}[!t]
   \centering
   \includegraphics[trim=1cm 7cm 1cm 1cm, width=1\linewidth]{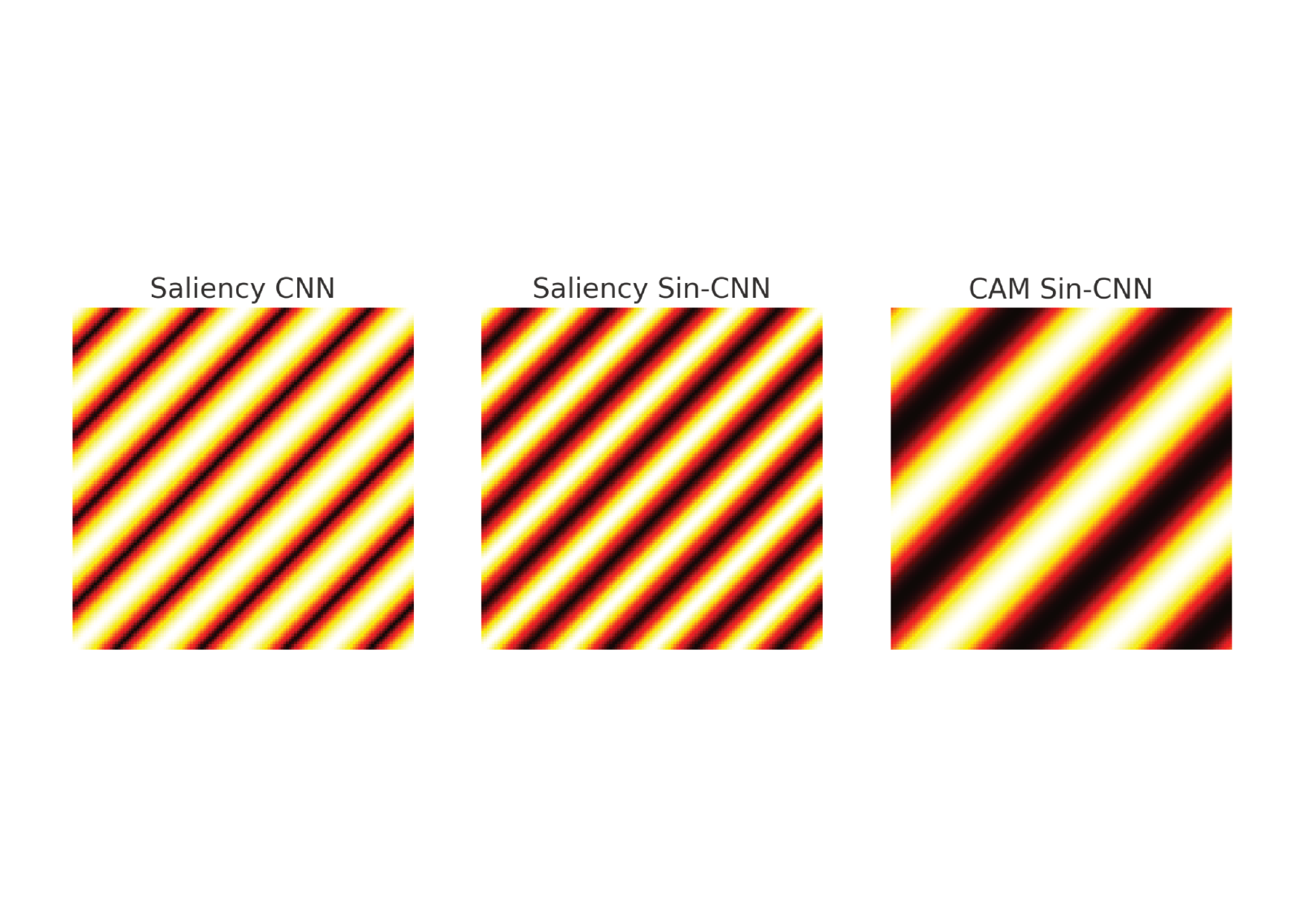}
   \caption{Left: CNN saliency; Middle: Sin-CNN saliency; Right: Sin-CNN CAM.}
   \label{fig:vis_maps}
 \end{figure}

\subsection{Adversarial \& Noise Robustness}
\label{sec:adv_robust}
We evaluate regression robustness under FGSM attacks (\(\epsilon\!=\!0.01,0.03,0.05\)) and additive Gaussian noise (\(\sigma\!=\!0.01,0.03,0.05\)). Table~\ref{tab:adv_noise} reports MSE on the synthetic attosecond set under each perturbation.

\begin{table}[!t]
  \centering
  {\footnotesize
  \resizebox{0.98\linewidth}{!}{%
  \begin{tabular}{lccc}
    \toprule
    \multicolumn{4}{c}{\textbf{FGSM MSE}$\downarrow$} \\
    \midrule
    Model               & $\epsilon=0.01$ & $\epsilon=0.03$ & $\epsilon=0.05$ \\
    \midrule
    CNN                 & $0.020\!\pm\!0.001$ & $0.038\!\pm\!0.001$ & $0.065\!\pm\!0.002$ \\
    \textbf{Sin-CNN (fixed)}   & $\mathbf{0.012\!\pm\!0.001}$ & $\mathbf{0.022\!\pm\!0.001}$ & $\mathbf{0.039\!\pm\!0.002}$ \\
    \textbf{Sin-CNN (tunable)} & $\mathbf{0.011\!\pm\!0.001}$ & $\mathbf{0.020\!\pm\!0.001}$ & $\mathbf{0.035\!\pm\!0.002}$ \\
    \midrule
    \multicolumn{4}{c}{\textbf{Noise MSE}$\downarrow$} \\
    \midrule
    Model               & $\sigma=0.01$   & $\sigma=0.03$   & $\sigma=0.05$   \\
    \midrule
    CNN                 & $0.018\!\pm\!0.001$ & $0.045\!\pm\!0.002$ & $0.085\!\pm\!0.003$ \\
    \textbf{Sin-CNN (fixed)}   & $\mathbf{0.010\!\pm\!0.001}$ & $\mathbf{0.028\!\pm\!0.001}$ & $\mathbf{0.055\!\pm\!0.002}$ \\
    \textbf{Sin-CNN (tunable)} & $\mathbf{0.009\!\pm\!0.001}$ & $\mathbf{0.025\!\pm\!0.001}$ & $\mathbf{0.050\!\pm\!0.002}$ \\
    \bottomrule
  \end{tabular}
  }
  }

  \caption{Regression MSE under FGSM adversarial attack (top block) and Gaussian noise (bottom block); mean$\pm$std over $N{=}5$.}
  \label{tab:adv_noise}
\end{table}

Sin-Basis variants degrade more gracefully than standard CNN, retaining lower error across increasing \(\epsilon\) and \(\sigma\).

\subsection{Frequency-Shift Mismatch Analysis}
\label{sec:freq_shift}
We simulate out-of-band periodicities by rescaling the delay axis by factors \(s=\{0.5,1.5,2.0\}\). Figure~\ref{fig:freq_shift} plots synthetic‐spectrogram MSE vs.\ \(s\).

All models’ error grows with \(s\), but Sin-Basis increases more slowly, showing better tolerance to frequency mismatch.

\subsection{Generalization and Safety Discussion}
\label{sec:safety}
Our FGSM experiments (Table~\ref{tab:adv_noise}) show Sin-Basis Networks retain substantially lower MSE under adversarial (\(\epsilon\le0.05\)) and noise perturbations (\(\sigma\le0.05\)) than standard CNNs. Similarly, frequency-shift analysis (Fig.~\ref{fig:freq_shift}) demonstrates more gradual error growth for Sin-Basis when presented with out-of-band periodicities (shift factors up to $2\times$). These results underscore Sin-Basis’s enhanced robustness—but also highlight that extreme perturbations or shifts beyond \(s>2\) may still drive performance degradation, motivating future work on adaptive spectral defenses and safety guarantees.

\section{Conclusion and Future Work}

We have presented Sin-Basis Networks, a lightweight weight-space reparameterization that embeds fixed sinusoidal transforms into convolutional, transformer, and capsule layers. This design attains strong performance on regression and classification tasks with pronounced periodic structure, while retaining robustness to noise, adversarial perturbations, and frequency-shift mismatch. Throughout, we report \textbf{mean$\pm$std over $N{=}5$ seeds} and assess significance via Wilcoxon signed-rank tests. Under standard circular-shift/BTTB assumptions, Sin-Basis exhibits a \emph{stable shift response} (equivariance-like span behavior) rather than strict invariance. Scope-wise, we observe small drops on non-periodic natural images (e.g., ImageNet-100: CNN/ViT \(-0.6/-0.2\) pp), clarifying that our method targets wave-form modalities with harmonic content. Computationally, the mapping adds only \(\approx\)1–1.5 s/epoch and 1–2 ms/image with unchanged parameter count (Table~\ref{tab:efficiency}). Code, configs, pretrained weights, and data-generation/preprocessing scripts will be released upon publication; Appendix summarizes repository layout and commands.

\textbf{Future directions.}
\begin{itemize}
  \item \textbf{Adaptive spectral defenses:} dynamically modulate frequency responses to counter strong attacks and structured noise; extend to certified robustness.
  \item \textbf{Controllable frequency modules:} principled policies for switching between \emph{fixed} and \emph{tunable} Sin-Basis using FFT/ACF cues, and placement policies per backbone (e.g., patch embedding vs.\ $Q/K/V$/MLP).
  \item \textbf{Hybrid priors:} combine sinusoidal bases with texture/edge/shape priors to broaden applicability on general vision tasks.
  \item \textbf{Generalization \& transfer:} meta-learning and domain adaptation for zero-shot transfer across unseen spectral domains; larger-scale datasets.
  \item \textbf{Failure-mode analysis:} systematic tests under extreme frequency shifts and spatial translations beyond $2\times$; guidelines for non-periodic regimes.
  \item \textbf{Hardware-efficient deployment:} kernels and scheduling for real-time inference on edge/FPGA/neuromorphic platforms.
\end{itemize}

\bigskip




\end{document}